\DeclareMathOperator*{\argmin}{argmin}
\newtheorem{theorem}{Theorem}[section]
\newtheorem{proposition}[theorem]{Proposition}
\definecolor{darkred}{rgb}{0.8, 0, 0}
\definecolor{darkblue}{rgb}{0, 0, 205}
\newcommand{\blind}{0}
\begin{document}

\def\spacingset#1{\renewcommand{\baselinestretch}%
{#1}\small\normalsize} \spacingset{1}

\if0\blind
{
  \title{\bf KTBoost: Combined Kernel and Tree Boosting}
  
  \author{Fabio Sigrist\thanks{Email: fabio.sigrist@hslu.ch. Address: Lucerne University of Applied Sciences and Arts, Suurstoffi 1, 6343 Rotkreuz, Switzerland.}}
  \maketitle
}
\fi

\bigskip

\spacingset{1} 

\bibliographystyle{abbrvnat}

\begin{abstract}
	We introduce a novel boosting algorithm called `KTBoost' which combines \textbf{k}ernel boosting and \textbf{t}ree boosting. In each boosting iteration, the algorithm adds either a regression tree or reproducing kernel Hilbert space (RKHS) regression function to the ensemble of base learners. Intuitively, the idea is that discontinuous trees and continuous RKHS regression functions complement each other, and that this combination allows for better learning of functions that have parts with varying degrees of regularity such as discontinuities and smooth parts. We empirically show that KTBoost significantly outperforms both tree and kernel boosting in terms of predictive accuracy in a comparison on a wide array of data sets.
	
\end{abstract}

\section{Introduction}\label{intro}
Boosting algorithms \citep{freund1996experiments,friedman2000additive,mason2000boosting,friedman2001greedy,buhlmann2007boosting} enjoy large popularity in both applied data science and machine learning research, among other things, due to their high predictive accuracy observed on a wide range of data sets \citep{chen2016xgboost}. Boosting additively combines base learners by sequentially minimizing a risk functional. Despite the fact that there is almost no restriction on the type of base learners in the seminal papers of \citet{freund1996experiments} and \citet{freund1997decision}, very little research has been done on combining different types of base learners. To the best of our knowledge, except for one reference \citep{hothorn2010model}, existing boosting algorithms use only one type of functions as base learners. To date, regression trees are the most common choice of base learners, and a lot of effort has been made in recent years to develop tree-based boosting methods that scale to large data \citep{chen2016xgboost,ke2017lightgbm,ponomareva2017tf,CatBoost2017}. 

In this article, we relax the assumption of using only one type of base learners by combining regression trees \citep{breiman1984classification} and reproducing kernel Hilbert space (RKHS) regression functions \citep{scholkopf2001learning,berlinet2011reproducing} as base learners. In short, RKHS regression is a form of non-parametric regression which shows state-of-the-art predictive accuracy for many data sets as it can, for instance, achieve near-optimal test errors \citep{belkin18a,belkin18b}, and kernel classifiers parallel the behaviors of deep networks as noted in \citet{zhang2016understanding}. As there is now growing evidence that base learners do not necessarily need to have low complexity \cite[e.g.][]{wyner2017explaining}, continuous, or smooth, RKHS functions have thus the potential to complement discontinuous trees as base learners.

\subsection{Summary of results}
We introduce a novel boosting algorithm denoted by `KTBoost' which combines \textbf{k}ernel and \textbf{t}ree boosting. In each boosting iteration, the KTBoost algorithm adds either a regression tree or a penalized RKHS regression function, also known as kernel ridge regression \citep{Murphy2012}, to the ensemble. This is done by first learning both a tree and an RKHS function using one step of functional Newton's method or functional gradient descent, and then selecting the base learner whose addition to the ensemble results in the lowest empirical risk. The KTBoost algorithm thus chooses in each iteration a base learner from two fundamentally different function classes. Functions in an RKHS are continuous and, depending on the kernel function, they also have higher regularity. Trees, on the other hand, are discontinuous functions. 

\begin{figure}[ht!]
	\centering
	\includegraphics[width=0.7\columnwidth]{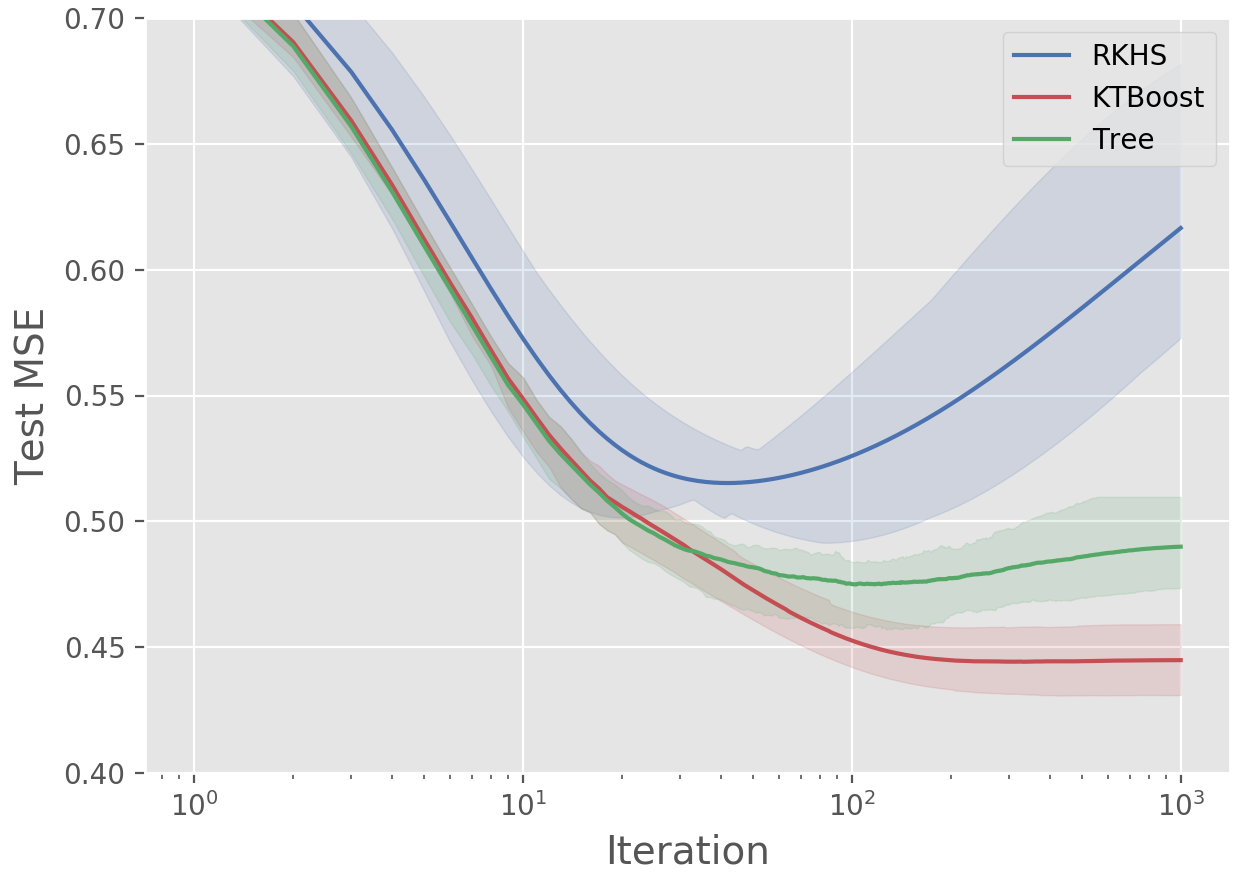}
	\caption{Test mean square error (MSE) versus the number of boosting iteration for KTBoost in comparison with tree and kernel boosting for one data set (wine).}
	\label{traceplot}
\end{figure}

Intuitively, the idea is that the different types of base learners complement each other, and that this combination allows for better learning of functions that exhibit parts with varying degrees of regularity. We demonstrate this effect in a simulation study in Section \ref{simstudysec}. To briefly illustrate that the combination of trees and RKHS functions as base learners can achieve higher predictive accuracy, we report in Figure \ref{traceplot} test mean square errors (MSEs) versus the number of boosting iterations for one data set (wine). The solid lines show average test MSEs over ten random splits into training, validation, and test data sets versus the number of boosting iterations. The confidence bands are obtained after point-wise excluding the largest and smallest MSEs. Tuning parameters of all methods are chosen on the validation data sets. See Section \ref{exp} for more details on the data set and the choice of tuning parameters.\footnote{For better comparison, the shrinkage parameter $\nu$, see Equation \eqref{shrinkage}, is set to a fix value ($\nu=0.1$) in this example. In the experiments in Section \ref{realworlddata}, the shrinkage parameter is also chosen using cross-validation.} The figure illustrates how the combination of tree and kernel boosting (KTBoost) results in a lower test MSE compared to both tree and kernel boosting. In our extensive experiments in Section \ref{realworlddata}, we show on a large collection of data sets that the combination of trees and RKHS functions leads to a lower generalization error compared to both only tree and only kernel boosting. Our approach is implemented in the Python package \texttt{KTBoost} which is openly available on the Python Package Index (PyPI) repository.\footnote{See https://github.com/fabsig/KTBoost for more information.}

\subsection{Related work}
Combining predictions from several models has been successfully applied in many areas of machine learning such as diversity inducing methods \citep[e.g.][]{mendes2012ensemble} or multi-view learning; see e.g. \citet{peng2018multiview} for a recent example of a boosting application. However, the way boosting combines base learners is different from traditional ensembles consisting of several models trained on potentially different data sets since, for instance, boosting reduces both variance and bias. Very little research has been done on combining different types of base learners in a boosting framework, and, to the best of our knowledge, there is no study which investigates the effect on the predictive accuracy when boosting different types of base learners.


The \texttt{mboost} R package of \citet{hothorn2010model} allows for combining different base learners which include linear functions, one- and two-dimensional smoothing splines, spatial terms, regression trees, as well as user-defined ones. This approach is different from ours since \texttt{mboost} uses a component-wise approach where every base learner typically depends on only a few features, and in each boosting update, the term which minimizes a least squares approximation to the negative gradient of the empirical risk is added to the ensemble. In contrast, in our approach, the tree and the kernel machine depend on all features by default, base learners are learned using Newton's method or gradient descent, and we select the base learner whose addition to the ensemble directly results in the lowest empirical risk. 

The idea that machine learning methods should be able learn both smooth as well as non-smooth functions has recently received attention also in other areas of machine learning. For instance, \citet{imaizumi2019deep} and \citet{hayakawa2020minimax} argue that one of the reasons for the superior predictive accuracy of deep neural networks, over e.g. kernel methods, is their ability to also learn non-smooth functions.


\section{Preliminaries}
\subsection{Boosting}
There exist population as well as sample versions of boosting algorithms. For the sake of brevity, we only consider the latter here. Assume that we have data $\{(x_i,y_i)\in \mathbb{R}^p\times\mathbb{R}, i=1,\dots,n\}$ from a probability distribution $P_{X,Y}$. The goal of boosting is to find a function $F:\mathbb{R}^p\rightarrow \mathbb{R}$ for predicting $y$ given $x$, where $F$ is in a function space $\Omega_{\mathcal{S}}$ with inner product $\langle\cdot,\cdot\rangle$ given by $\langle F,F\rangle=E_X\left(F(X)^2\right),$ and the expectation is with respect to the marginal distribution $P_X$ of $P_{X,Y}$. Note that $y$ can be categorical, discrete, continuous, or of mixed type depending on whether the conditional distribution $P_{Y|X}$ is absolutely continuous with respect to the Lebesgue, a counting measure, or a mixture of the two; see, e.g., \citet{sigrist2017grabit} for an example of the latter. Depending on the data and the goal of the application, the function can also be multivariate. For the sake of notational simplicity, we assume in the following that $F$ is univariate. The extension to the multivariate case $F=(F^k),k=1,\dots,d$, is straightforward; see, e.g., \citet{sigrist2018gradient}.


The goal of boosting is to find a minimizer $F^*(\cdot)$ of the empirical risk functional $R(F)$:
\begin{equation}\label{emprisk}
F^*(\cdot)=\argmin_{F(\cdot)\in \Omega_{\mathcal{S}}}R(F)=\argmin_{F(\cdot)\in \Omega_{\mathcal{S}}}\sum_{i=1}^n L(y_i,F(x_i)),
\end{equation}
where $L(Y,F)$ is an appropriately chosen loss function such as the squared error for regression or the logistic regression loss for binary classification, and $\Omega_{\mathcal{S}}=span(\mathcal{S})$ is the span of a set of base learners $\mathcal{S}=\{f_j:\mathbb{R}^p\rightarrow \mathbb{R}\}$. Boosting finds $F^*(\cdot)$ in a sequential way by iteratively adding an update $f_m$ to the current estimate $F_{m-1}$:
\begin{equation}\label{boostupdate}
F_m(x)= F_{m-1}(x)+ f_m(x),~~f_m\in \mathcal{S}, ~~m=1,\dots,M,
\end{equation}
such that the empirical risk is minimized
\begin{equation}\label{minupdt}
f_m=\argmin_{f\in \mathcal{S}}R\left(F_{m-1}+  f\right).
\end{equation}
Since this usually cannot be found explicitly, one uses an approximate minimizer. Depending on whether gradient or Newton boosting is used, the update $f_m$ is either obtained as the least squares approximation to the negative functional gradient or by applying one step of functional Newton's method which corresponds to minimizing a second order Taylor expansion of the risk functional; see Section \ref{ktboost} or \citet{sigrist2018gradient} for more information. For increased predictive accuracy \citep{friedman2001greedy}, an additional shrinkage parameter $\nu>0$ is usually added to the update equation:
\begin{equation}\label{shrinkage}
F_m(x)= F_{m-1}(x)+ \nu f_m(x).
\end{equation}


\subsection{Reproducing kernel Hilbert space regression}\label{krr}
Assume that $K:\mathbb{R}^d\times \mathbb{R}^d\rightarrow \mathbb{R}$ is a positive definite kernel function. Then there exists a reproducing kernel Hilbert space (RKHS) $\mathcal{H}$ with an inner product $\langle\cdot,\cdot\rangle$ such that (i) the function $K(\cdot,x)$ belongs to $\mathcal{H}$ for all $x\in \mathbb{R}^d$ and (ii) $f(x)=\langle f,K(\cdot,x)\rangle$ for all $f\in \mathcal{H}$. Suppose we are interested in finding the minimizer
\begin{equation}\label{minKern}
\argmin_{f\in \mathcal{H}}\sum_{i=1}^n(y_i-f(x_i))^2+\lambda \| f\|^2_{\mathcal{H}},
\end{equation}
where $\lambda\geq0$ is a regularization parameter. The representer theorem \citep{scholkopf2001generalized} then states that there is a unique minimizer of the form $$f(\cdot)=\sum_{j=1}^n\alpha_jK(x_j,\cdot)$$ and \eqref{minKern} can be written as $$\argmin_{\alpha\in \mathbb{R}^n}\|y-K\alpha\|^2+\lambda\alpha^TK\alpha,$$
where $y=(y_1,\dots,y_n)^T$, $K\in\mathbb{R}^{n\times n}$ with $K_{ij}=K(x_i,x_j)$, and $\alpha=(\alpha_1,\dots,\alpha_n)^T$. Taking derivatives and equaling them to zero, we find the explicit solution as
$$\alpha=(K+\lambda I_n)^{-1}y,$$ where $I_n$ denotes the $n$-dimensional identity matrix. 


There is a close connection between Gaussian process regression and kernel regression. The solution to \eqref{minKern} is the posterior mean conditional on the data of a zero-mean Gaussian process with covariance function $K$. Further, since $$f(x)=k(x)^T(K+\lambda I_n)^{-1}y,$$ where 
\begin{equation}\label{kxeq}
k(x)=(K(x_1,x),\dots,K(x_n,x))^T,
\end{equation}
kernel regression can also be interpreted as a two-layer neural network.


\subsection{Regression trees}\label{trees}
We denote by $\mathcal{T}$ the space which consists of regression trees \citep{breiman1984classification}. Following the notation used in \citet{chen2016xgboost}, a regression tree is given by
\[f^T(x)=w_{s(x)},\] where $s:\mathbb{R}^p\rightarrow \{1,\dots,J\}$, $w\in \mathbb{R}^J$, and $J\in \mathbb{N}$ denotes the number of terminal nodes of the tree $f^T(x)$. $s$ determines the structure of the tree, i.e., the partition of the space, and $w$ denotes the leaf values. As in \citet{breiman1984classification}, we assume that the partition of the space made by $s$ is a binary tree where each cell in the partition is a rectangle of the form  $R_j=(l_1,u_1]\times\dots\times(l_p,u_p]\subset\mathbb{R}^p$ with $-\infty\leq l_m<u_m\leq\infty$ and $s(x)=j$ if  $x\in R_j$.

\section{Combined kernel and tree boosting}\label{ktboost}
Let $R^2(F_{m-1}+f)$ denote the functional, which is proportional to a second order Taylor approximation of the empirical risk in \eqref{emprisk} at the current estimate $F_{m-1}$:
\begin{equation}\label{apprRisk}
R^2(F_{m-1}+f)=\sum_{i=1}^n g_{m,i} f(x_i)+\frac{1}{2}h_{m,i} f(x_i)^2,
\end{equation}
where $g_{m,i}$ and $h_{m,i}$ are the functional gradient and Hessian of the empirical risk evaluated at the functions $F_{m-1}(x)$ and $I_{\{x=x_i\}}(x)$, where $I_{\{x=x_i\}}(x)=1$ if $x=x_i$ and $0$ otherwise:
\begin{equation}\label{gradHess}
\begin{split}
&g_{m,i}=\frac{\partial}{\partial F}L(y_i,F)\Big|_{F=F_{m-1}(x_i)},\\
&h_{m,i}=\frac{\partial^2}{\partial F^2}L(y_i,F)\Big|_{F=F_{m-1}(x_i)}.
\end{split}
\end{equation}

The KTBoost algorithm presented in Algorithm \ref{ktboost_algo} works as follows. In each boosting iteration, a candidate tree $f^T_m(x)$ and RKHS function $f^K_m(x)$ are found as minimizers of the second order Taylor approximation $R^2(F_{m-1}+f)$. This corresponds to applying one step of a functional version of Newton's method. It can be shown that candidate trees $f^T_m(x)$ can be found as weighted least squares minimizers; see, e.g., \citet{chen2016xgboost} or \citet{sigrist2018gradient}. Further, the candidate penalized RKHS regression functions $f^K_m(x)$ can be found as shown in Proposition \ref{kernelup} below. The KTBoost algorithm then selects either the tree or the RKHS function such that the addition of the base learner to the ensemble according to Equation \eqref{shrinkage} results in the lowest risk. Note that for the RKHS boosting part, the update equation $F_m(x)= F_{m-1}(x)+ \nu f_m(x)$ can be replaced by simply updating the coefficients $\alpha_m$.

\begin{algorithm}[ht!]
	\caption{KTBoost}\label{ktboost_algo}
	\begin{algorithmic}[1]
		\STATE Initialize $F_0(x)=\argmin_{c\in\mathbb{R}^d}R(c)$.
		\FOR{$m=1$ {\bfseries to} $M$}
		\STATE Compute the gradient $g_{m,i}$ and Hessian $h_{m,i}$ as defined in \eqref{gradHess}
		\STATE Find the candidate regression tree $f^T_m(x)$ and RKHS function $f^K_m(x)$
		\begin{itemize}
			\item[] $f^T_m(x)=\argmin_{f\in \mathcal{T}}R^2(F_{m-1}+f)$ 
			\item[] $f^K_m(x)=\argmin_{f\in \mathcal{H}}R^2(F_{m-1}+f)+\frac{1}{2} \lambda\| f\|^2_{\mathcal{H}}$
		\end{itemize}
		where the approximate risk $R^2(F_{m-1}+f)$ is defined in \eqref{apprRisk}
		\IF{$R\left(F_{m-1}+  \nu f^T_m(x)\right)\leq R\left(F_{m-1}+ \nu  f^K_m(x)\right)$}
		\STATE  $f_m(x)=f^T_m(x)$
		\ELSE  
		\STATE $f_m(x)=f^K_m(x)$
		\ENDIF
		\STATE Update $F_m(x)= F_{m-1}(x)+ \nu f_m(x)$
		\ENDFOR
	\end{algorithmic}
\end{algorithm}

If either the loss function is not twice differentiable in its second argument or the second derivative is zero or constant on a non-null set of the support of $X$, one can alternatively use gradient boosting. The gradient boosting version of KTBoost is obtained as a special case of the Algorithm \ref{ktboost_algo} by setting $h_{m,i}=1$. Gradient boosting has the advantage that it is computationally less expensive than Newton boosting since, in contrast to \eqref{wKernMat}, the kernel matrix does not depend on the iteration number $m$; see Section \ref{comcost} for more details.

\begin{proposition}\label{kernelup}
	The kernel ridge regression solution $f^K_m(x)$ in the regularized Newton boosting update step is given by $f^K_m(x)=k(x)^T\alpha_m,$ where $k(x)$ is defined in \eqref{kxeq} and
	\begin{equation}\label{wKernMat}
	\alpha_m=D_m\left(D_mKD_m+\lambda I_n\right)^{-1}D_my_m,
	\end{equation}
	where $D_m=\text{diag}\left(\sqrt{h_{m,i}}\right),$ $h_{m,i}>0$, $y_m=(-{g_{m,1}}/{h_{m,1}},\dots,-{g_{m,n}}/{h_{m,n}})^T,$ and $I_n$ is the identity matrix of dimension $n$.
\end{proposition}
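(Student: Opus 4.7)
My plan is to (i) recognize the defining optimization problem as a weighted kernel ridge regression by completing the square, (ii) apply the representer theorem to reduce to a finite-dimensional quadratic in a coefficient vector, and (iii) use a symmetrizing change of variables to recover the form of \eqref{wKernMat}.

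For (i), since $h_{m,i}>0$, I would complete the square in each summand of $R^2(F_{m-1}+f)$ from \eqref{apprRisk}:
\[
g_{m,i} f(x_i) + \tfrac{1}{2} h_{m,i} f(x_i)^2 = \tfrac{1}{2} h_{m,i}\bigl(f(x_i)-y_{m,i}\bigr)^2 + c_i,
\]
with $y_{m,i} = -g_{m,i}/h_{m,i}$ as in the statement and $c_i$ independent of $f$. After dropping the additive constants, the minimization problem defining $f^K_m$ becomes the weighted penalized least-squares problem
\[
\min_{f\in\mathcal{H}}\ \tfrac{1}{2}\sum_{i=1}^n h_{m,i}\bigl(f(x_i)-y_{m,i}\bigr)^2 + \tfrac{1}{2}\lambda\|f\|_{\mathcal{H}}^2.
\]

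For (ii), I would invoke the representer theorem of \citet{scholkopf2001generalized}, whose hypotheses are met since the fit term depends on $f$ only through the evaluations $f(x_1),\dots,f(x_n)$ and the penalty is strictly increasing in $\|f\|_{\mathcal{H}}$. This produces a minimizer of the form $f(\cdot) = k(\cdot)^T \alpha$ with $k(\cdot)$ as in \eqref{kxeq}, for which the vector of training predictions is $K\alpha$ and $\|f\|_{\mathcal{H}}^2 = \alpha^T K\alpha$. Substituting reduces the problem to the finite-dimensional quadratic
\[
\min_{\alpha\in\mathbb{R}^n}\ \tfrac{1}{2}(K\alpha - y_m)^T D_m^2 (K\alpha - y_m) + \tfrac{1}{2}\lambda\,\alpha^T K\alpha.
\]

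For (iii), I would apply the invertible reparameterization $\alpha = D_m u$ (invertible because $h_{m,i}>0$), which turns the objective into
\[
\tfrac{1}{2}\bigl\|D_m K D_m\, u - D_m y_m\bigr\|^2 + \tfrac{1}{2}\lambda\,u^T(D_m K D_m)u,
\]
whose first-order optimality condition is of the form $(D_m K D_m + \lambda I_n)u = D_m y_m$. Since $\lambda>0$ makes this matrix strictly positive definite, inverting gives $u = (D_m K D_m + \lambda I_n)^{-1}D_m y_m$, and back-substituting $\alpha_m = D_m u$ yields \eqref{wKernMat}. The main subtlety I anticipate is that the raw stationarity equation is $\tilde K\bigl[(\tilde K + \lambda I_n)u - \tilde y\bigr] = 0$ with $\tilde K = D_m K D_m$ and $\tilde y = D_m y_m$, so one must justify cancelling the outer factor $\tilde K$. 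I would handle this either by assuming the standard strict positive definiteness of $K$, or, more robustly, by verifying directly that the candidate $\alpha_m$ satisfies the unreduced normal equations $K D_m^2 K\alpha + \lambda K\alpha = K D_m^2 y_m$ of the convex objective in $\alpha$, which sidesteps any rank deficiency.
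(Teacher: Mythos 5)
Your proposal is correct and follows essentially the same route as the paper: complete the square using $h_{m,i}>0$, invoke the representer theorem to reduce to a finite-dimensional quadratic in $\alpha$, and solve the resulting normal equations. The only difference is cosmetic—where the paper solves directly for $\alpha$ and then applies the push-through identity $\left(D_m^2K+\lambda I_n\right)^{-1}D_m^2=D_m\left(D_mKD_m+\lambda I_n\right)^{-1}D_m$, you reach the symmetrized form via the substitution $\alpha=D_mu$—and your observation about justifying the cancellation of the singular outer factor (the paper implicitly assumes $K$ is invertible) is a worthwhile extra degree of care.
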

\begin{proof}
	We have 
	\begin{equation*}
	\begin{split}
	\argmin_{f\in \mathcal{H}}&\sum_{i=1}^n g_{m,i} f(x_i)+\frac{1}{2}h_{m,i} f(x_i)^2+\frac{1}{2}\lambda \| f\|^2_{\mathcal{H}}\\
	&=\argmin_{f\in \mathcal{H}}\sum_{i=1}^n h_{m,i}\left(-\frac{g_{m,i}}{h_{m,i}}-f(x_i)\right)^2+\lambda \| f\|^2_{\mathcal{H}}\\
	&=\argmin_{\alpha}\|D_m{y}_m-D_m{K}\alpha\|^2+\lambda\alpha^TK\alpha.
	\end{split}
	\end{equation*}
If we take derivatives with respect to $\alpha$, equal them to zero, and solve for $\alpha$, we find that
	\begin{equation*}
	\begin{split}
	\alpha_m&=\left(KD_m^2K+\lambda K\right)^{-1}KD_m^2 y_m\\
	&=\left(D_m^2K+\lambda I_n\right)^{-1}D_m^2y_m\\
	&=D_m\left(D_mKD_m+\lambda I_n\right)^{-1}D_my_m.
	\end{split}
	\end{equation*}
\end{proof}


\subsection{Reducing computational costs for large data}\label{comcost}
Concerning the regression trees, finding the splits when growing the trees is the computationally demanding part. There are several approaches in the literature on how this can be done efficiently for large data; see, e.g., \citet{chen2016xgboost} or \citet{ke2017lightgbm}. The computationally expensive part for finding the kernel regression updates is the factorization of the kernel matrix which scales with $O(n^3)$ in time. There are several approaches that allow for computational efficiency in the large data case. Examples of this include low rank approximations based on, e.g., the Nystr{\"o}m method \citep{williams2001using} and extensions of it such as divide-and-conquer kernel ridge regression \citep{zhang2013divide,zhang2015divide}, early stopping of iterative optimization methods \citep{yao2007early,blanchard2010optimal, raskutti2014early,ma2017diving}, stochastic gradient descent \citep{cesa2004generalization, dai2014scalable}, random feature approximations \citep{rahimi2008random}, and compactly supported kernel functions \citep{gneiting2002compactly,bevilacqua2016estimation} which results in a sparse kernel matrix $K$ which can be efficiently factorized. 

Note that if gradient descent is used instead of Newton's method, the RKHS function $f^K_m(x)$ can be found efficiently by observing that, in contrast to \eqref{wKernMat}, the kernel matrix $K+\lambda I_n$ does not depend on the iteration number $m$, i.e., its inverse or a Cholesky factor of it needs to be calculated only once. Further, the two learners can be learned in parallel. 

In our empirical analysis, we use the Nystr{\"o}m method for dealing with large data sets. The Nystr{\"o}m method approximates the kernel $K(\cdot,\cdot)$ by first choosing a set of $l$ so-called Nystr{\"o}m samples $x^*_1,\dots, x^*_l$. Often these are obtained by sampling uniformly from the data. Denoting the kernel matrix that corresponds to these points as $K^*$, the Nystr{\"o}m method then approximates the kernel $K(\cdot,\cdot)$ as 
$$K(x,y)\approx k_l(x)^T{K^*_{l,l}}^{-1}k_l(y),$$
where $k_l(x)=(K(x,x^*_1),\dots,K(x,x^*_l))^T$ and $\left(K^*_{l,l}\right)_{j,k}=K(x^*_j,x^*_k)$, $1\leq j,k\leq l$. In particular, the reduced-rank Nystr{\"o}m approximation to the full kernel matrix $K$ is given by 
$$K\approx K^*_{n,l}{K^*_{l,l}}^{-1}{K^*_{n,l}}^T,$$ where $\left(K^*_{n,l}\right)_{j,k}=K(x_j,x^*_k)$, $1\leq j\leq n$, $1\leq k\leq l$.


\section{Experimental results}\label{exp}
\subsection{Simulation study}\label{simstudysec}
We first conduct a small simulation study to illustrate that the combination of discontinuous trees and continuous kernel machines can indeed better learn functions with both discontinuous and smooth parts. We consider random functions $F:[0,1]\rightarrow\mathbb{R}$ with five random jumps in $[0,0.5]$:
\begin{equation}
\begin{split}
F(x) =&\sum_{i=1}^{5}g_i\textbf{1}_{(t_{i},1]}(x)+\sin(8\pi x),\\& t_i \overset{\text{iid}}{\sim}  \text{Unif}(0,0.5), ~~ g_i\overset{\text{iid}}{\sim} \text{Unif}(0,5)
\end{split}
\end{equation}
and data according to $$y_i=F(x_i)+N(0,0.25^2), ~~x_i\overset{\text{iid}}{\sim} \text{Unif}(0,1),~~ i=1,\dots,1000.$$ In Figure \ref{simstudy}  on the left-hand side, an example of such a function and corresponding data is shown. We simulate 1000 times such random functions as well as training, validation, and test data of size $n=1000$. For each simulation run, learning is done on the training data. The number of boosting iterations is chosen on the validation data with the maximum number of boosting iterations being $M=1000$. We use a learning rate of $\nu=0.1$ as this is a reasonable default value \citep{buhlmann2007boosting} and trees of depth $=1$ as there are no interactions. Further, for the RKHS ridge regression, we use a Gaussian kernel 
\begin{equation}\label{rbf}
K(x_1,x_2)=\exp\left(-\|x_1-x_2\|^2/\rho^2\right),
\end{equation}
with $\rho=0.1$ and $\lambda=1$. In Figure \ref{simstudy} on the right-hand side, we show the pointwise test mean square error (MSE) for tree and kernel boosting as well as the combined KTBoost algorithm. We observe that tree boosting performs better than kernel boosting in the area where the discontinuities are located and, conversely, kernel boosting outperforms tree boosting on the smooth part. The figure also clearly shows that KTBoost outperforms both tree and kernel boosting as it achieves the MSE of tree boosting on the interval with jumps and the MSE of kernel boosting on the smooth part.

\begin{figure}[ht!]
	\centering
	\includegraphics[width=0.46\columnwidth]{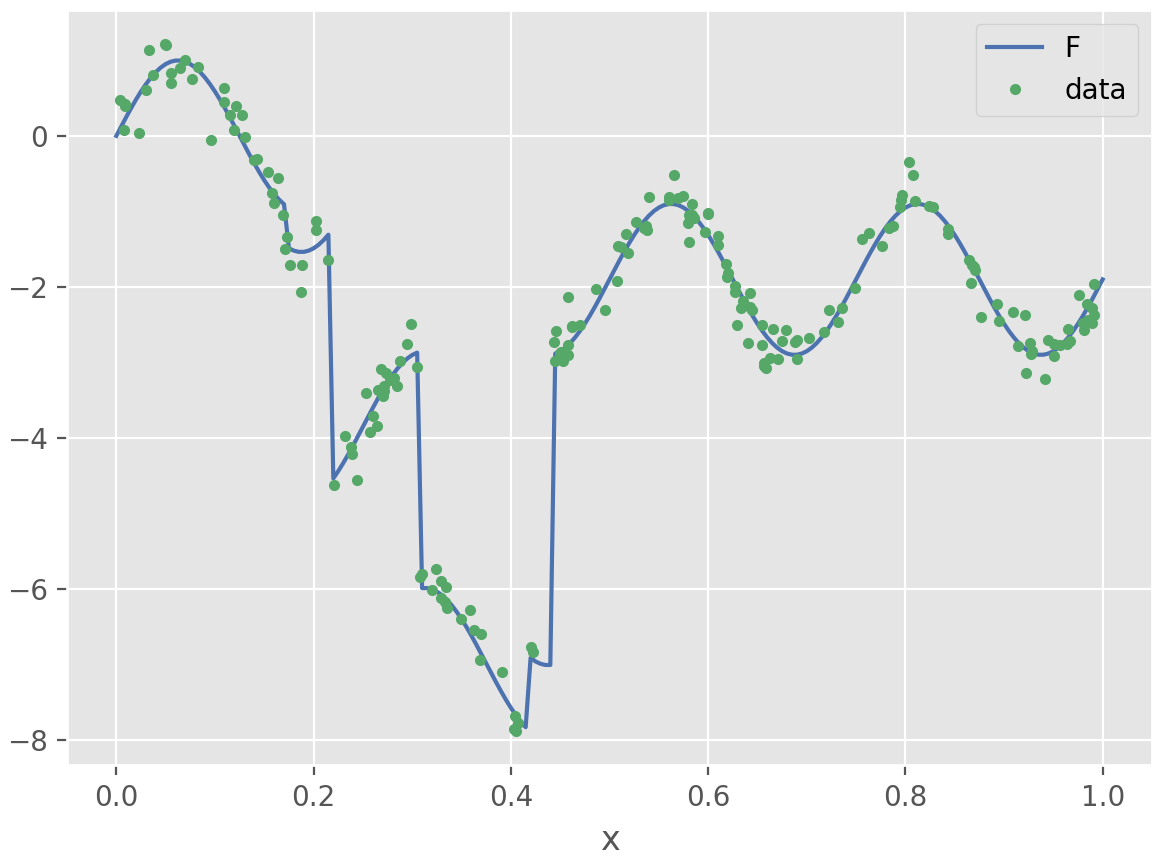}
	\includegraphics[width=0.49\columnwidth]{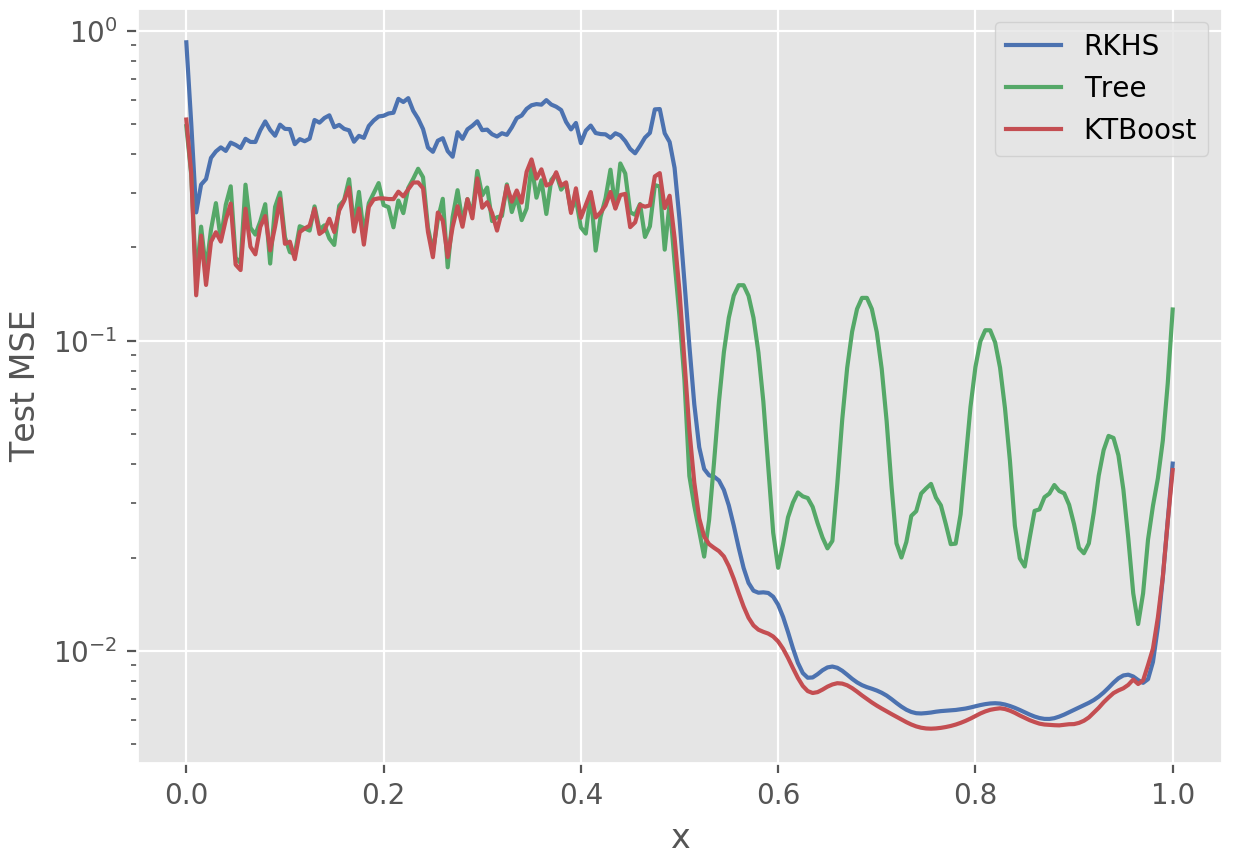}
	\caption{An example of a random function with five random jumps in $[0,0.5]$ and corresponding observed data (left plot) and pointwise mean square error (MSE) for tree and kernel boosting as well as the combined KTBoost algorithm (right plot).}
	\label{simstudy}
\end{figure}
For the purpose of illustration, we have considered a one-dimensional example. However, in practice discontinuities, or strong non-linearities, as well as smooth parts are likely to occur at the interaction level in higher dimensions of a feature space.

\subsection{Real-world data}\label{realworlddata}
In the following, we compare the KTBoost algorithm with tree and kernel boosting using the following Delve, Keel, Kaggle, and UCI data sets: abalone, ailerons, bank8FM, elevators, energy, housing, liberty, NavalT, parkinsons, puma32h, sarcos, wine, adult, cancer, ijcnn, ionosphere, sonar, car, epileptic, glass, and satimage. Detailed information on the number of samples and features can be found in Table \ref{data_sum}. We consider both regression as well as binary and multiclass classification data sets. Further, we include data sets of different sizes in order to investigate the performance on both smaller and larger data sets, as small- to moderately-sized data sets continue to be widely used in applied data science despite the recent focus on very large data sets in machine learning research. We use the squared loss for regression, the logistic regression loss for binary classification, and the cross-entropy loss with the softmax function for multiclass classification.
\begin{table}[ht!]
\centering
\caption{Summary of data sets.} 
\label{data_sum}
\begin{tabular}{llrr}
  \hline
\hline
Data & \# classes & Nb. samples & Nb. features \\ 
  \hline
abalone & regression & 4177 &  10 \\ 
  ailerons & regression & 13750 &  40 \\ 
  bank8FM & regression & 8192 &   8 \\ 
  elevators & regression & 16599 &  18 \\ 
  energy & regression & 768 &   8 \\ 
  housing & regression & 506 &  13 \\ 
  liberty & regression & 50999 & 117 \\ 
  NavalT & regression & 11934 &  16 \\ 
  parkinsons & regression & 5875 &  16 \\ 
  puma32h & regression & 8192 &  32 \\ 
  sarcos & regression & 48933 &  21 \\ 
  wine & regression & 4898 &  11 \\ 
   \hline
adult & 2 & 48842 & 108 \\ 
  cancer & 2 & 699 &   9 \\ 
  ijcnn & 2 & 141691 &  22 \\ 
  ionosphere & 2 & 351 &  34 \\ 
  sonar & 2 & 208 &  60 \\ 
  car & 4 & 1728 &  21 \\ 
  epileptic & 5 & 11500 & 178 \\ 
  glass & 7 & 214 &   9 \\ 
  satimage & 6 & 6438 &  36 \\ 
   \hline
\hline
\end{tabular}
\end{table}

For the regression data sets, we use gradient boosting, and for the classification data sets, we use boosting with Newton updates since this can result in more accurate predictions \citep{sigrist2018gradient}. For some classification data sets (adult, ijcnn, epileptic, and satimage), Newton boosting is computationally infeasible on a standard single CPU computer with the current implementation of KTBoost, despite the use of the Nystr{\"o}m method with a reasonable number of Nystr\"om samples, say 1000, since the weighted kernel matrix in Equation \eqref{wKernMat} needs to be factorized in every iteration. We thus also use gradient boosting for these data sets. Technically, it would be possible for these cases to learn the trees using Newton's method, or using the hybrid gradient-Newton boosting version of \citet{friedman2001greedy}, but this would result in an unfair comparison that is biased in favor of the base learner which is learned with the better optimization method. For the larger data sets (liberty, sarcos, adult, ijcnn), we use the Nystr{\"o}m method described in Section \ref{comcost}. Specifically, we use $l=1000$ Nystr{\"o}m samples, which are uniformly sampled from the training data. In general, the larger the number of Nystr{\"o}m samples, the lower the approximation error but the higher the computational costs. \citet{williams2001using} reports good results with $l\approx1000$ for several data sets. All calculations are done with the Python package \texttt{KTBoost} on a standard laptop with a 2.9 GHz quad-core processor and 16GB of RAM.


All data sets are randomly split into three non-overlapping parts of equal size to obtain training, validation and test sets. Learning is done on the training data, tuning parameters are chosen on the validation data, and model comparison is done on the holdout test data. All input features are standardized using the training data to have approximately mean zero and variance one. In order to measure the generalization error and approximately quantify variability in it, we use ten different random splits of the data into training, validation and test sets. We note that when using a resampling approach, standard statistical tests, such as a paired t-test, cannot be used to do a pairwise comparison of the different algorithms on a dataset basis since training and test datasets in different splits are dependent due to overlap \citep{dietterich1998approximate, bengio2004no, demvsar2006statistical}. In particular, this can result in biased standard error estimates for the generalization error.

For the RKHS ridge regression, we use again a Gaussian kernel; see Equation \eqref{rbf}. Concerning tuning parameters,  we select the number of boosting iterations $M$ from $\{1,2,\dots, 1000\}$, the learning rate $\nu$ from $\{1,10^{-1},10^{-2},10^{-3}\}$, the maximal depth of the trees from $\{1,5,10\}$, and the kernel ridge regularization parameter $\lambda$ from $\{1,10\}$. Further, the kernel range parameter $\rho$ is chosen using $k$-nearest neighbors distances as described in the following. We first calculate the average distance of all $k$-nearest neighbors in the training data, where $k$ is a tuning parameter selected from $\{5,50,500,5000,n-1\}$ and $n$ is the size of the training data. We then choose $\rho$ such that the kernel function has decayed to a value of $0.01$ at this average $k$-nearest neighbors distance. This is motivated by the fact that for a corresponding Gaussian process with such a covariance function, the correlation has decayed to a level of 1\% at this $k$-nearest neighbor distance. If the training data contains less than $5000$ (or $500$) samples, we use $n-1$ as the maximal number for the $k$-nearest neighbors. In addition, we include $\rho$ which equals the average $(n-1)$-nearest neighbor distance. The latter choice is done in order to also include a range which results in a kernel that decays slowly over the entire space. For the large data sets where the Nystr{\"o}m method is used, we calculate the average $k$-nearest neighbors distance based on the Nystr{\"o}m samples. I.e., in this case, the maximal $k$ equals $l-1$.

\begin{table}[ht!]
\centering
\caption{Comparison of KTBoost with tree and kernel boosting using test mean square error 
          (regression) and test error rate (classification). 
          In parentheses are approximate standard deviations.
          Below are average ranks of the methods over the different datasets. 
          A p-value of a Friedman test with an Iman and Davenport correction for comparing the different algorithms is also reported.
          The last row shows Holm-Bonferroni corrected p-values of sign tests for pairwise 
          comparison of the KTBoost algorithm with tree and kernel boosting.} 
\label{res_save}
\begingroup\footnotesize
\begin{tabular}{ll|l|l}
  \hline
\hline
Data & KTBoost & Tree & Kernel \\ 
  \hline
abalone & 4.65~~(0.248) & 5.07~~(0.261) & \textbf{4.64}~~(0.255) \\ 
  ailerons & \textbf{2.64e-08}~~(6.19e-10) & 8.11e-08~~(2.39e-09) & 2.64e-08~~(6.19e-10) \\ 
  bank8FM & \textbf{0.000915}~~(4.02e-05) & 0.000945~~(2.47e-05) & 0.000945~~(5.83e-05) \\ 
  elevators & \textbf{4.83e-06}~~(2.9e-07) & 5.66e-06~~(1.44e-07) & 5.18e-06~~(3.89e-07) \\ 
  energy & \textbf{0.282}~~(0.0372) & 0.335~~(0.093) & 1.3~~(0.377) \\ 
  housing & \textbf{12.7}~~(3.19) & 15.1~~(3.23) & 13.6~~(2.51) \\ 
  liberty & \textbf{14.5}~~(0.323) & 14.5~~(0.314) & 15.2~~(0.345) \\ 
  NavalT & \textbf{6.51e-09}~~(1.15e-09) & 1.15e-06~~(1.58e-07) & 6.51e-09~~(1.15e-09) \\ 
  parkinsons & \textbf{73.3}~~(1.98) & 81.1~~(2.44) & 73.3~~(1.91) \\ 
  puma32h & \textbf{6.5e-05}~~(2.27e-06) & 6.51e-05~~(2.13e-06) & 0.000695~~(2.2e-05) \\ 
  sarcos & \textbf{7.99}~~(0.206) & 9.6~~(0.207) & 17.8~~(0.586) \\ 
  wine & \textbf{0.444}~~(0.012) & 0.471~~(0.0169) & 0.506~~(0.0106) \\ 
   \hline
adult & 0.128~~(0.00295) & \textbf{0.128}~~(0.00313) & 0.163~~(0.00512) \\ 
  cancer & 0.0362~~(0.00744) & 0.0415~~(0.0153) & \textbf{0.0358}~~(0.0107) \\ 
  ijcnn & \textbf{0.0122}~~(0.000685) & 0.0123~~(0.000702) & 0.0387~~(0.00516) \\ 
  ionosphere & \textbf{0.0872}~~(0.017) & 0.103~~(0.0226) & 0.107~~(0.0239) \\ 
  sonar & 0.194~~(0.0394) & 0.223~~(0.05) & \textbf{0.193}~~(0.0491) \\ 
  car & \textbf{0.0399}~~(0.00505) & 0.0411~~(0.00685) & 0.041~~(0.00624) \\ 
  epileptic & \textbf{0.354}~~(0.00612) & 0.373~~(0.00614) & 0.442~~(0.0265) \\ 
  glass & \textbf{0.308}~~(0.0711) & 0.315~~(0.0589) & 0.344~~(0.0581) \\ 
  satimage & \textbf{0.089}~~(0.00452) & 0.112~~(0.00504) & 0.0903~~(0.00417) \\ 
   \hline
Average rank & 1.24 & 2.48 & 2.29 \\ 
   \hline
p-val Friedman test & 7.84e-06 &  &  \\ 
  Adj. p-val sign test &  & 6.29e-05 & 0.00885 \\ 
   \hline
\hline
\end{tabular}
\endgroup
\end{table}

The results are shown in Table \ref{res_save}. For the regression data sets, we show the average test mean square error (MSE) over the different sample splits, and for the classification data sets, we calculate the average test error rate (=misclassification rate). The numbers in parentheses are approximate standard deviations over the different sample splits. In the last row, we report the average rank of every method over the different data sets. We find that KTBoost achieves higher predictive accuracy than both tree and kernel boosting for the large majority of data sets. Specifically, KTBoost has an average rank of $1.24$ and achieves higher predictive accuracy than both tree and kernel boosting for seventeen out of twenty-one data sets. A Friedman test with an Iman and Davenport correction \citep{iman1980approximations} gives a p-value of $7.84\cdot10^{-6}$ which shows that the differences in the three methods are highly significant. We next assess whether the pairwise differences in accuracy between the different methods are statistically significant using a sign test. Further, we apply a Holm-Bonferroni correction \citep{holm1979simple} to account for the fact that we do multiple tests. Despite the sign test having low power and the application of the conservative Holm-Bonferroni correction, KTBoost is highly significantly better than both tree and kernel boosting with adjusted p-values below $0.01$. The difference between kernel and tree boosting is not significant with both the adjusted and non-adjusted p-values being above $0.1$ (result not tabulated).



Note that we do not report the optimal tuning parameters since this is infeasible for all combinations of data sets and sample splits, and aggregate values are not meaningful since different tuning parameters often compensate each other in a non-linear way (e.g., number of iterations, learning rate, and tree depth or kernel regularization $\lambda$). Further, it is also difficult to concisely summarize the composition of the ensembles in terms of different base learners as a base learner that is added in an earlier boosting stage is more important than one that is added in a later stage \citep{buhlmann2003boosting}, and the properties of the base learners also depend on the chosen tuning parameters. We also note that one can also consider additional tuning parameters. For trees, this includes the minimal number of samples per leaf, row and column sub-sampling, and penalization of leave values, and for the kernel regression, this includes the smoothness of the kernel function, or, in general, the class of kernel functions. One could also use different learning rates for the two types of base learners. Due to limits on computational costs, we have not considered all possible choices and combinations of tuning parameters. However, it is likely that a potential increase in predictive performance in either tree or kernel boosting will also result in an increase in accuracy of the combined KTBoost algorithm. We also note that in our experimental setup, the tuning parameter grid for the KTBoost algorithm is larger compared to the tree and kernel boosting cases. This seems inevitable in order to allow for the fairest possible comparison, though. Restricting one type of tuning parameters for the combined version but not for the single base learner case seems to be no alternative. Somewhat alleviating this concern is the fact that, in the above simulation study, we also find outperformance when not choosing tuning parameters using cross-validation, and on the downside, a larger tuning parameter grid might potentially also lead to overfitting. Finally, we remark that we have also considered to compare the risk of the un-damped base learners 
$$R\left(F_{m-1}+f^T_m(x)\right)\leq R\left(F_{m-1}+ f^K_m(x)\right)$$ 
in line $5$ of Algorithm \ref{ktboost_algo} when selecting the base learners that is added to the ensemble, and we obtain very similar results (see supplementary material).

\section{Conclusions}\label{conclusion}
We have introduced a novel boosting algorithm, which combines trees and RKHS functions as base learners. Intuitively, the idea is that discontinuous trees and continuous RKHS functions complement each other since trees are better suited for learning rougher parts of functions and RKHS regression functions can better learn smoother parts of functions. We have compared the predictive accuracy of the KTBoost algorithm with tree and kernel boosting and have found that KTBoost achieves significantly higher predictive accuracy compared to tree and kernel boosting.

Future research can be done in several directions. First, it would be interesting to investigate to which extent other base learners such as neural networks \citep{huang2018,nitanda2018} are useful in addition to trees and kernel regression functions. Generalizing the KTBoost algorithm using reproducing kernel Kre\u{\i}n space (RKKS) learners \citep{ong2004learning,oglic2018learning} instead of RKHS learners can also be investigated. Further, theoretical results such as learning rates or bounds on the risk could help to shed further insights on why the combination of trees and kernel machines leads to increased predictive accuracy. Finally, it would be interesting to compare the KTBoost algorithm on very large data sets using different strategies for reducing the computational complexity of the RKHS part. Several potential strategies on how this can be done are briefly outlined in Section \ref{comcost}.

\section*{Acknowledgements}
We would like to thank Christoph Hirnschall for helpful feedback on the article. This research was partially supported by the Swiss Innovation Agency - Innosuisse (25746.1 PFES-ES).


\bibliography{bibfile_KTBoost}

\clearpage

\section*{Supplementary material}

\subsection*{Results when the shrinkage parameter is not used for choosing the base learner}
\begin{table}[ht!]
\centering
\caption{Comparison of KTBoost with tree and kernel boosting using test mean square error 
          (regression) and test error rate (classification). 
          In parentheses are approximate standard deviations.
          Below are average ranks of the methods over the different datasets. 
          A p-value of a Friedman test with an Iman and Davenport correction for comparing the different algorithms is also reported.
          The last row shows Holm-Bonferroni corrected p-values of sign tests for pairwise 
          comparison of the KTBoost algorithm with tree and kernel boosting.} 
\label{res_saveNoShrinkage}
\begingroup\footnotesize
\begin{tabular}{ll|l|l}
  \hline
\hline
Data & KTBoost & Tree & Kernel \\ 
  \hline
abalone & \textbf{4.63}~~(0.246) & 5.07~~(0.261) & 4.64~~(0.255) \\ 
  ailerons & 2.64e-08~~(6.19e-10) & 8.11e-08~~(2.39e-09) & \textbf{2.64e-08}~~(6.19e-10) \\ 
  bank8FM & \textbf{0.000908}~~(3.89e-05) & 0.000945~~(2.47e-05) & 0.000945~~(5.83e-05) \\ 
  elevators & \textbf{4.71e-06}~~(2.16e-07) & 5.66e-06~~(1.44e-07) & 5.18e-06~~(3.89e-07) \\ 
  energy & \textbf{0.28}~~(0.039) & 0.335~~(0.093) & 1.3~~(0.377) \\ 
  housing & \textbf{12.8}~~(3.42) & 15.1~~(3.23) & 13.6~~(2.51) \\ 
  liberty & \textbf{14.5}~~(0.299) & 14.5~~(0.314) & 15.2~~(0.345) \\ 
  NavalT & 6.51e-09~~(1.15e-09) & 1.15e-06~~(1.58e-07) & \textbf{6.51e-09}~~(1.15e-09) \\ 
  parkinsons & 73.4~~(2.02) & 81.1~~(2.44) & \textbf{73.3}~~(1.91) \\ 
  puma32h & 6.51e-05~~(2.27e-06) & \textbf{6.51e-05}~~(2.13e-06) & 0.000695~~(2.2e-05) \\ 
  sarcos & \textbf{7.71}~~(0.224) & 9.6~~(0.207) & 17.8~~(0.586) \\ 
  wine & \textbf{0.463}~~(0.0124) & 0.471~~(0.0169) & 0.506~~(0.0106) \\ 
   \hline
adult & \textbf{0.128}~~(0.00304) & 0.128~~(0.00313) & 0.163~~(0.00512) \\ 
  cancer & \textbf{0.0349}~~(0.00873) & 0.0415~~(0.0153) & 0.0358~~(0.0107) \\ 
  ijcnn & \textbf{0.0122}~~(0.000567) & 0.0123~~(0.000702) & 0.0387~~(0.00516) \\ 
  ionosphere & \textbf{0.0855}~~(0.018) & 0.103~~(0.0226) & 0.107~~(0.0239) \\ 
  sonar & 0.194~~(0.0394) & 0.223~~(0.05) & \textbf{0.193}~~(0.0491) \\ 
  car & \textbf{0.0405}~~(0.0059) & 0.0411~~(0.00685) & 0.041~~(0.00624) \\ 
  epileptic & \textbf{0.351}~~(0.00543) & 0.373~~(0.00614) & 0.442~~(0.0265) \\ 
  glass & 0.322~~(0.0702) & \textbf{0.315}~~(0.0589) & 0.344~~(0.0581) \\ 
  satimage & \textbf{0.0892}~~(0.00399) & 0.112~~(0.00504) & 0.0903~~(0.00417) \\ 
   \hline
Average rank & 1.29 & 2.43 & 2.29 \\ 
   \hline
p-val Friedman test & 5.48e-05 &  &  \\ 
  Adj. p-val sign test &  & 0.000664 & 0.0144 \\ 
   \hline
\hline
\end{tabular}
\endgroup
\end{table}

\clearpage


\end{document}